\documentclass[openacc]{rsproca_new}%%%%where rsproca is the template name
\usepackage{comment}

\newtheorem{theorem}{\bf Theorem}[section]

\usepackage[english]{babel}
\usepackage[utf8]{inputenc}
\usepackage{balance}
\usepackage{xspace}
\usepackage{bbm}
\usepackage{rotating}
\usepackage{url}
\usepackage{csquotes}
\usepackage{subcaption}
\usepackage{caption}
\usepackage{paralist}
\usepackage{multirow}
\usepackage{multicol}
\usepackage{amsmath,amssymb,mathtools,amsthm}
\usepackage{booktabs}
\usepackage{array}
\usepackage{algorithm}
\usepackage[noend]{algpseudocode}
\usepackage{textcomp}
\usepackage{siunitx}
\usepackage{microtype}
\usepackage{safecolours}
\usepackage{pgfplots}
\pgfplotsset{compat=newest,unit code/.code={\si{#1}},plot coordinates/math parser=false,ylabel right/.style={
        after end axis/.append code={
            \node [rotate=90, anchor=north] at (rel axis cs:1,0.5) {#1};
        }   
    },
}
\usepgfplotslibrary{units,external,groupplots,fillbetween}
\usetikzlibrary{positioning,angles,quotes,patterns,shapes,spy, shapes.misc,backgrounds}
\graphicspath{{./figures/}}

\usepackage{scalerel,stackengine}

\makeatletter
\newcommand{\myitem}[1]{%
\item[#1]\protected@edef\@currentlabel{#1}%
}
\makeatother

\newtheorem{defi}{Definition}

\newtheorem{remark}{Remark}

\newcommand{\fakepar}[1]{\vspace{1mm}\noindent\textbf{#1.}}

\DeclareSIUnit{\belmilliwatt}{Bm}
\DeclareSIUnit{\dBm}{\deci\belmilliwatt}

\DeclareMathOperator*{\E}{\mathbb{E}}

\DeclareMathOperator*{\R}{\mathbb{R}}

\let\originalleft\left
\let\originalright\right
\renewcommand{\left}{\mathopen{}\mathclose\bgroup\originalleft}
\renewcommand{\right}{\aftergroup\egroup\originalright}

\newcommand\figref[1]{Figure~\ref{#1}}

\newcommand\secref[1]{Section~\ref{#1}}

\newcommand{\eg}{e.g.,\xspace}
\newcommand{\ie}{i.e.,\xspace}

\newcommand{\capt}[1]{\mdseries{\emph{#1}}}

\titlehead{Research}

\usepackage{fancyhdr}
\newcommand{\mytitle}{\textbf{Accepted final version.}
To appear in \textit{Philosophical Transactions of the Royal Society A: Mathematical, Physical and Engineering Sciences}.}
\begin{document}

%%%% Article title to be placed here
\title{Ergodicity in\\ reinforcement learning}

\author{%%%% Author details
Dominik Baumann$^{1}$, Erfaun Noorani$^{2}$, Arsenii Mustafin$^{1}$,
Xinyi Sheng$^{1}$, Bert Verbruggen$^{3}$, Arne Vanhoyweghen$^{3}$, Vincent Ginis$^{3,4}$, and Thomas B.\ Sch\"{o}n$^{5}$}

%%%%%%%%% Insert author address here
\address{$^{1}$Cyber-physical Systems Group, Aalto University, Espoo 02150, Finland\\
$^{2}$Control \& Autonomous Systems Engineering Group, MIT Lincoln Laboratory, Lexington, MA 02421-6426, USA\\
$^{3}$Data Analytics Lab, Vrije Universiteit Brussel, Brussel 1050, Belgium\\
$^{4}$School of Engineering and Applied Sciences, Harvard University, Cambridge, Massachusetts 02138, USA\\
$^{5}$Department of Information Technology, Uppsala University, 75105 Uppsala, Sweden\\}

%%%% Subject entries to be placed here %%%%
\subject{artificial intelligence, statistics, systems theory}

%%%% Keyword entries to be placed here %%%%
\keywords{Reinforcement Learning, Ergodicity}

%%%% Insert corresponding author and its email address}
\corres{Dominik Baumann\\
\email{dominik.baumann@aalto.fi}}

%%%% Abstract text to be placed here %%%%%%%%%%%%
\begin{abstract}
In reinforcement learning, we typically aim to optimize the expected value of the sum of rewards an agent collects over a trajectory.
However, if the process generating these rewards is non-ergodic, the expected value, i.e., the average over infinitely many trajectories with a given policy, is uninformative for the average over a single, but infinitely long trajectory.
Thus, if we care about how the individual agent performs during deployment, the expected value is not a good optimization objective.
In this paper, we discuss the impact of non-ergodic reward processes on reinforcement learning agents through an instructive example, relate the notion of ergodic reward processes to more widely used notions of ergodic Markov chains, and present existing solutions that optimize long-term performance of individual trajectories under non-ergodic reward dynamics.
\end{abstract}
%%%%%%%%%%%%%%%%%%%%%%%%%%%

\begin{fmtext}

\section{Introduction}
Reinforcement learning (RL) has achieved tremendous success in addressing complex tasks~\cite{wurman2022outracing,bellemare2020autonomous}. At its core lies a deceptively simple yet impactful principle: agents learn

\end{fmtext}

% \rsbreak

%%%%%%%%%% Insert the texts which can accomdate on firstpage in the tag "fmtext" %%%%%

%%%%%%%%%%%%%%% End of first page %%%%%%%%%%%%%%%%%%%%%

\maketitle

\noindent through direct interaction with their environment. In each step of this process, the agent observes the state of the environment, selects an action, and receives a scalar reward that signals the quality of that action. Standard textbooks~\cite{powell2021reinforcement,sutton2018reinforcement,bertsekas2019reinforcement}  commonly define the objective of RL as the maximization of the expected value of the sum of all rewards obtained by an agent. This formulation is also intuitively appealing. If the rewards tell us how ``good'' our actions were, we should choose actions that maximize the rewards we can \emph{expect} to receive.
However, the expected value is the average over infinitely many rollouts of a policy.
This \emph{ensemble average} may fail to capture what a single agent experiences over an infinitely long trajectory. In non-ergodic reward processes, the average across infinitely many rollouts can differ from the time average along a single rollout. In many applications, such as medicine, finance, and robotics, the primary concern is a policy’s sustained performance of a single agent over an extended time horizon.

Consider a delivery robot that earns 100 points for each successful delivery but loses 1 point per time step, down to a minimum of 0. The robot has two choices: a direct route that takes 10 time steps but passes through a crowd, where there is a \SI{1}{\percent} chance on each trip that someone destroys the robot and ends all future deliveries, or a safe route that takes 20 time steps and avoids the crowd entirely. If we only look at the average reward per trip, the direct route seems better; about 89 points compared to 80 for the safe route. However, over many trips, consistently choosing the direct route will almost surely result in the robot’s destruction, leaving it with no future rewards. The safe route, while slower, allows the robot to operate indefinitely and achieve higher returns in the long run. This illustrates how focusing solely on short-term averages can be misleading when considering long-term performance.

The above represents a variation of the Russian Roulette example, which serves as a classical illustration of non-ergodicity~\cite{ornstein1973application}. 
It is closely related to safe reinforcement learning~\cite{brunke2022safe}, where we often aim to maximize a reward while avoiding violations of safety constraints.
Nevertheless, non-ergodicity arises in further contexts and is, thus, more broadly applicable.
For instance, when the state distribution is non-stationary, implying that it evolves over time. 
This is a typical case in continual RL~\cite{elelimy2025rethinking}, and even when the environment is stationary, the agent might perceive it as non-stationary.
This can happen because the environment is too complex for the agent to model, as stated in the ``big world hypothesis''~\cite{javed2024big}, or because there are other agents who are also learning and changing their policies, the typical case in multi-agent RL~\cite{albrecht2024multi}.
Another example for non-ergodic reward dynamics is environments with multiplicative rewards~\cite{baier2025multiplicative}, a typical case in biology or chemistry.
We will discuss those different cases in more detail in \secref{sec:ergodicity}.
There, we will also examine the relation between ergodicity of reward processes and ergodicity of the underlying Markov decision process (MDP), the typical way of modeling RL problems.

%%%%%%%%%%%%%%%%%%%%%%%%%%%%%%%%%%%%%%%%%%%%%
% Added after final submission
%%%%%%%%%%%%%%%%%%%%%%%%%%%%%%%%%%%%%%%%%%%%%
\thispagestyle{fancy}	
\pagestyle{empty}

In this tutorial paper, we investigate the implications of non-ergodic reward processes in RL. Our contributions are as follows:
\begin{itemize}
    \item \textbf{Introducing Non-Ergodic Reward Processes}: We define the concept of non-ergodic reward processes in reinforcement learning in \secref{sec:problem} and discuss why they matter in \secref{sec:ergodicity}. 
    \item \textbf{Illustrative Example}: In \secref{sec:example}, we present a simple yet revealing example that, despite its apparent simplicity, state-of-the-art reinforcement learning algorithms fail to solve. 
    \item \textbf{Broader Perspective on Ergodicity}: We discuss the broader relevance of ergodicity and non-ergodicity in reinforcement learning, connecting the discussion to various application domains in \secref{sec:ergodicity}.
    \item \textbf{Overview of Existing Approaches}: We review three strategies proposed in the literature for handling non-ergodic rewards and explain their core ideas in \secref{sec:solutions}. 
\end{itemize}

\section{Problem setting}
\label{sec:problem}

We formulate the problem in the standard Markov decision process setting, which serves as a common theoretical basis for reinforcement learning.
That is, we consider an environment with states $s_{t_k}\in\mathcal{S}\subseteq\R^n$ in which an agent takes actions $a_{t_k}\in\mathcal{A}\subseteq\R^m$ at a discrete time-step $t_k\in\mathbb{N}$.
In response to those actions, the environment transitions to a new state according to an unknown, potentially non-stationary probability distribution $P$.
Besides, the agent receives a reward $r_{t_k}\in\R$.
Following the standard textbooks, the agent's goal is now to learn a policy $\pi:\,\mathcal{S}\to\mathcal{A}$ that maximizes the expected cumulative reward~$R$ over a fixed time  horizon~$T$
\begin{equation}
    \label{eqn:std_objective}
    {\E}_\pi[R_{t_k}] = {\E}_\pi\left[\sum_{\kappa=0}^Tr_{t_\kappa}\right].
\end{equation}

Once a policy is fixed, an MDP reduces to a \emph{Markov reward process (MRP)}: a Markov chain over states together with a reward function (on states, or on transitions). 
As a first step, we analyze the properties of the reward process in isolation and then use them as a basis to develop the theory of RL ergodicity in Section~\ref{sec:ergodicity}. 

Before developing theory and providing precise definitions, let us give some intuition for the notion of ergodicity. 
Informally, a stochastic process is called \emph{ergodic} when long-run time averages along a single realization coincide with ensemble averages. 
That is, the reward sequence $(r_t)$ an individual agent observes in response to a fixed policy over infinitely many steps matches the average rewards observed by infinitely many agents at a single step. 
In contrast, when the reward process is non-ergodic, this expected return \emph{differs} from the return of the individual agent.
To make the notion of ergodicity precise, let us temporarily introduce the notation $r^{(i)}_{t_k}$ to denote the reward received at time-step~$t_k$ of trajectory~$i$.
Statistically, the trajectories can be interpreted as realizations of a stochastic process.
We now have two basic possibilities to define an average.
We can fix a realization or trajectory~$i$ and average over an infinite time horizon, or we can fix a time-step~$t_k$ and average over infinitely many trajectories.
If these two averages align, the process is said to be ergodic.
\begin{defi}[(Strong) Reward ergodicity]
    \label{def:reward_erg}
    A reward process $r^{(i)}_{t_k}$ is called \emph{ergodic} if, for all time steps~$t_k$ and realizations~$i$, 
    \[
        \underbrace{\lim_{N\to\infty}\frac{1}{N}\sum_{j=0}^Nr^{(j)}_{t_k}}_{=\E[r_{t_k}]} = \lim_{T\to\infty}\frac{1}{T}\sum_{\kappa=0}^Tr^{(i)}_{t_\kappa}
    \]
    almost surely.
\end{defi}
If the reward process $r_{t_k}$ is non-ergodic, the averages differ. 
Thus, optimizing the expected return may not necessarily maximize the long-term performance of the average agent. 
To build intuition, we present a simple illustrative example in the next section.

Note that the formal definition of ergodicity given above requires the equivalence of limits for every time step $t_k$ starting from the initial time step $t_0$. 
This implies that the system starts in a stationary state, which significantly narrows the set of cases we can analyze. 
In practice, however, we often observe systems that may start differently but converge to a steady state in the long run. 
For such systems, we introduce a definition of \emph{asymptotic ergodicity}.

\begin{defi}[Asymptotic reward ergodicity]\label{def:asymp_reward_erg}
    A reward process $r^{(i)}_{t_k}$ is called \emph{asymptotically ergodic} if, for all realizations~$i$, 
    \[
        \lim_{N, t_k \to\infty}\frac{1}{N}\sum_{j=0}^Nr^{(j)}_{t_k} = \lim_{T\to\infty}\frac{1}{T}\sum_{\kappa=0}^Tr^{(i)}_{t_\kappa}
    \]
    almost surely.
\end{defi}

\begin{remark}
\label{rem:discount}
    In RL, we typically add a discount factor $\gamma\in(0,1)$ to~\eqref{eqn:std_objective}.
    That is,~\eqref{eqn:std_objective} changes to ${\E}_\pi\left[\sum_{\kappa=0}^T\gamma^\kappa r_{t_\kappa}\right]$.
    The relationship between discounting, ergodicity, and decision-making is investigated in~\cite{adamou2021microfoundations}.
    How to connect this explicitly to RL is an open research question and beyond the scope of this article.
\end{remark}

% \begin{remark}
%     In \emph{average reward RL}~\cite{zhang2021policy,wei2020model,wei2022provably}, we let $T$ in~\eqref{eqn:std_objective} go to infinity, and divide by the time $T$.
%     This is already close to the time average.
%     However, average RL still relies on an expected value.
%     As we will see in the subsequent section, by removing stochasticity through the expected value, we may focus on policies with exceptionally high reward in vanishingly few cases while failing almost surely.
% \end{remark}

\section{Illustrative example}
\label{sec:example}

We use an adapted version of an example originally introduced in~\cite{peters2016evaluating}. 
This version has been used, for instance, in~\cite{baumann2025reinforcement} in the context of RL.
Suppose an agent starts with an initial return of $R(0)=100$ and at each time step $t_k$ receives a reward $r_{t_k}$ according to the following dynamics:
\begin{align}
    \label{eqn:coin_toss_rew_dynamics}
    r_{t_k} = \begin{cases}
    0.5\alpha R_{t_{k-1}} &\text{ if } \eta = 1,\\
    -0.4\alpha R_{t_{k-1}} &\text{ otherwise},
    \end{cases}
\end{align}
where $\eta$ is a Bernoulli random variable taking values  0 or 1 with equal probability.
The example comes from economics and game theory.
Intuitively, we have an agent with a certain initial wealth (100 in our case) that can invest a fraction of its wealth, indicated in~\eqref{eqn:coin_toss_rew_dynamics} by the parameter $\alpha\in[0,1]$.
We then toss a fair coin.
If it comes up heads, the agent wins \SI{50}{\percent} of its investment; otherwise, it loses \SI{40}{\percent}.
Then, we play again with the new $R_{t_1}$.
As the game is relatively simple, we can write down the solution to~\eqref{eqn:std_objective} in closed form:
\[
    \E[R_T] = R_{t_0} (1+0.05\alpha)^T.
\]
Thus, to maximize the expected return, we should choose $\alpha$ as large as possible.
This matches the intuition: we can win \SI{50}{\percent} but lose only \SI{40}{\percent}, so on average, we should win \SI{5}{\percent} per round.
However, if we simulate the game with $\alpha=1$, we see in \figref{sfig:coin_toss} that of 10 randomly selected agents, all end up with a return close to 0.
This qualitative behavior is consistent, independent of the specific choice of agents: almost all agents fail under the seemingly optimal policy.

\begin{figure}
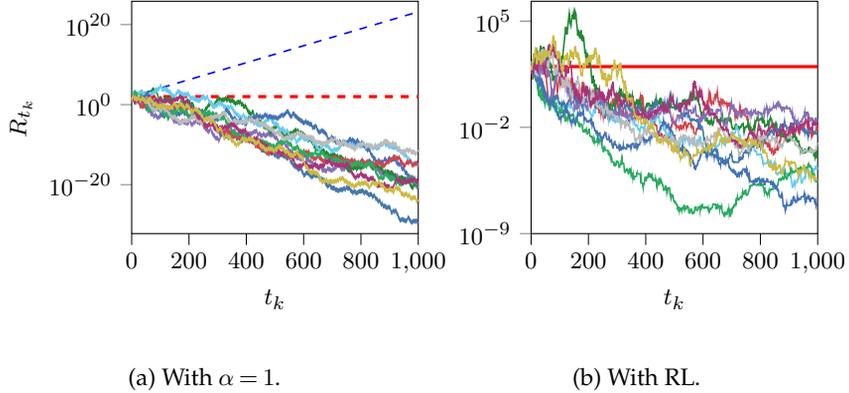

    \centering
    \begin{subfigure}{0.4\textwidth}
        \centering
        \input{tikz/coin_toss}
        \caption{With \(\alpha =1\).}
        \label{sfig:coin_toss}
    \end{subfigure}
    \hspace{0.2cm}
    \begin{subfigure}{0.4\textwidth}
        \centering
        \input{tikz/coin_toss_rl}
        \caption{With RL.}
        \label{sfig:coin_toss_rl}
    \end{subfigure}
    \caption{The coin-toss example. \capt{Both with the policy analytically optimizing the expected return (left) and the RL policy (right), the agents end up with close to 0 return. The red lines mark the initial return, while the dashed blue line in the left plot shows the expected return. The other trajectories represent different realizations of the game. Both figures are taken from~\cite{baumann2025reinforcement}.}}
    \label{fig:coin_toss}
\end{figure}

Why is that?
\figref{fig:coin_toss_explanation} shows all possible realizations of the game for two iterations.
If we average over the possible realizations after one and two iterations, we indeed see that the return grows with \SI{5}{\percent} as expected.
However, we also see that only one out of four realizations results in the agent ending up with a higher return than it started with.
Thus, the probability of ending up on a trajectory with positive growth is only \SI{25}{\percent}, and this probability shrinks exponentially to a set of measure 0 as we let time go to infinity.
In line with this observation,~\cite[Appendix]{hulme2023reply} analyzes the dynamics of the \emph{most likely value} as opposed to the expected value for this game.
From this analysis, we can see that the most likely value shrinks by approximately~\SI{5}{\percent} per round.
That is, while the expected return grows exponentially, the most likely return shrinks exponentially, at approximately the same rate.

\begin{figure}
    \centering
    \begin{tikzpicture}

\tikzstyle{circ} = [draw, circle, minimum size=3em]
\tikzset{>=latex}
\node[circ](start){100};

\node[circ, above right = 1em and 3em of start.east](win1){150};
\node[circ, below right = 1em and 3em of start.east](loss1){60};
\node[below = 6em of start] (overall1){100};
\node at(loss1|-overall1)(overall2){105};

\draw[->, wong-colour4](start) -- (win1);
\draw[->, wong-colour5](start) -- (loss1);
\draw[->] (overall1) -- node[midway, above]{$\cdot1.05$}(overall2);

\node[circ, above right = 1em and 3em of win1.east](win2){225};
\node[circ, below right = 1em and 3em of win1.east](loss21){90};
% \node[circ, above right = 1em and 3em of loss1.east](loss22){90};
\node[circ, below right = 1em and 3em of loss1.east](loss23){36};
\node at(loss23|-overall1)(overall3){110.25};

\draw[->, wong-colour4](win1) -- (win2);
\draw[->, wong-colour5](win1) -- (loss21);
\draw[->, wong-colour5](loss1) -- (loss21);
\draw[->, wong-colour5](loss1) -- (loss23);
\draw[->] (overall2) -- node[midway, above]{$\cdot1.05$}(overall3);
\end{tikzpicture}    
    \caption{Possible realizations of the coin-toss example. \capt{We can see that after two iterations of the game, the agent wins on average, but there is only one out of four possible realizations that lead to such a winning outcome. Taken from~\cite{baumann2025reinforcement}.}}
    \label{fig:coin_toss_explanation}
\end{figure}
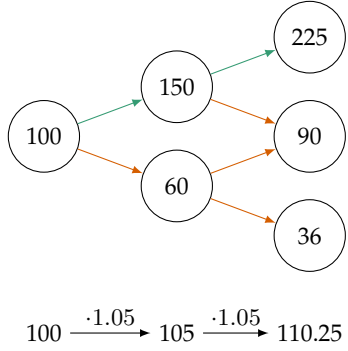

While this explains why the optimal solution following the expected value is not optimal for the average agent, it does not show that an RL agent fails to solve this game.
After all, the game is fairly simple, and RL algorithms have shown their success in much more complex tasks.
Besides, those algorithms do not analytically optimize~\eqref{eqn:std_objective}.
Thus,~\cite{baumann2025reinforcement} has trained proximal policy optimization (PPO)~\cite{schulman2017proximal} to solve the game.
We show the outcome in \figref{sfig:coin_toss_rl}.
Although the policies learned by PPO are not as detrimental as the static $\alpha=1$ policy, they still exhibit negative growth, ultimately resulting in a lower return than they started with.

\section{Ergodicity in reinforcement learning}
\label{sec:ergodicity}

In this section, we relate the notion of ergodic reward processes to that of ergodic Markov chains, where the latter are widely used in RL. 
We then discuss relevant cases of ``ergodicity-breaking,'' \ie settings in which we encounter non-ergodic rewards.

\subsection{Ergodic rewards vs.\ ergodic Markov chains}

We provide a quick overview of the theoretical aspects of ergodicity and show what kinds of Markov chain state dynamics imply reward processes that are ergodic or asymptotically ergodic. 
Throughout, we assume that the transition kernel $P$ is stationary and that the MDP has a finite number of states and actions. Additionally, we assume that the received reward is a deterministic function of the chosen action $a$ and the transition $(s,s')$.
We will provide sufficient conditions; note that necessary conditions are not our focus here---for example, if all rewards in the MDP are constant, then all reward processes produced by it are trivially ergodic.

We start by giving formal definitions of an ergodic Markov chain and an ergodic MRP.

\begin{defi}[Ergodic Markov chain]
\label{def:ergodic_chain}
A Markov chain is \emph{ergodic} if it is both irreducible and aperiodic, meaning it is possible to reach any state from any other state, and the chain does not get trapped in a cycle.
\end{defi}

\begin{defi}[Ergodic Markov reward process]
\label{def:ergodic_mrp}
A Markov reward process is \emph{ergodic} if its underlying Markov chain is ergodic according to Definition~\ref{def:ergodic_chain}.
\end{defi}

However, even an ergodic MRP is not sufficient to guarantee an ergodic reward process. 
To satisfy the “every time step” requirement, we also need to control the initial state distribution, which leads to the following theorem.

\begin{theorem}
    If an MRP is ergodic following Definition~\ref{def:ergodic_mrp} and the initial state is sampled from the stationary distribution, the observed reward process is ergodic as defined in Definition~\ref{def:reward_erg}.
\end{theorem}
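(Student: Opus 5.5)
We establish the two sides of the equality in Definition~\ref{def:reward_erg} separately and show that both equal the same constant, namely the stationary expected reward
\[
\bar r = \sum_{s,a,s'} \mu(s)\,\pi(a\mid s)\,P(s'\mid s,a)\,r(s,a,s'),
\]
where $\mu$ is the stationary distribution of the chain induced by the fixed policy. Since the state and action spaces are finite, rewards are bounded, and the reward at step $t_k$ is a deterministic function $r(s_{t_k},a_{t_k},s_{t_{k+1}})$. We therefore work with the extended chain $X_{t_k}=(s_{t_k},a_{t_k},s_{t_{k+1}})$ on a finite space. First we check that this extended chain inherits irreducibility on its support from the state chain, and that its stationary law is $\mu\otimes\pi\otimes P$. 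The reward process is then $f(X_{t_k})$ for a bounded function $f$.

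\textbf{Ensemble side.} Because $s_{t_0}\sim\mu$ and $\mu$ is invariant under $P_\pi$, induction gives $s_{t_k}\sim\mu$ for every $k$. Hence $X_{t_k}$ has the same stationary law for all $k$, and $\E[r_{t_k}]=\bar r$ for every $t_k$. The average over independent realizations $j$ at a fixed $t_k$ consists of i.i.d.\ bounded random variables. By the strong law of large numbers, it converges almost surely to $\E[r_{t_k}]=\bar r$. This is the step where the stationary initialization is essential: without it, $\E[r_{t_k}]$ would depend on $k$, and only the asymptotic version in Definition~\ref{def:asymp_reward_erg} could hold.

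\textbf{Time side.} For a fixed realization $i$, we invoke the ergodic theorem for irreducible finite Markov chains, i.e.\ the Markov-chain strong law of large numbers. For an irreducible positive-recurrent chain and bounded $f$,
\[
\frac1T\sum_{\kappa=0}^{T} f\bigl(X_{t_\kappa}\bigr)\to \sum_x \nu(x) f(x)=\bar r
\]
almost surely, from any initial distribution. We would prove or cite this by a regeneration argument. Split the trajectory into i.i.d.\ excursions between successive returns to a fixed state; these have finite expected length by finite-state positive recurrence. Applying the classical SLLN to the excursion sums and to the excursion lengths, together with Kac's formula, identifies the limit with $\bar r$. Aperiodicity is not actually needed for this step; it only strengthens the conclusion to convergence of the marginals. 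Combining the two sides yields equality almost surely, for all $t_k$ and all $i$. The quantifier over all $i$ is harmless, since it concerns a countable family of realizations, each of which is an almost-sure event.

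\textbf{Main obstacle.} The delicate step is the passage from the state chain to the transition-reward chain $X_{t_k}$. Irreducibility of the state chain does not immediately give irreducibility of the extended chain on the full product space, because some triples $(s,a,s')$ may never occur. We handle this by restricting to the support of $\mu\otimes\pi\otimes P$ and showing that any two triples in this support communicate: from $s'$ one can reach any $s''$ and then take any admissible $(a'',s''')$. Alternatively, we can avoid the extended chain altogether. Write $r_{t_k}$ as $g(s_{t_k},s_{t_{k+1}})$ plus a zero-mean martingale-difference term arising from the sampling of $a_{t_k}$ and $s_{t_{k+1}}$. The martingale term averages to zero almost surely by the martingale SLLN for bounded increments, and the ergodic theorem is then applied only to the expected-reward function $\bar g(s)=\sum_{a,s'}\pi(a\mid s)P(s'\mid s,a)r(s,a,s')$ along the irreducible state chain.
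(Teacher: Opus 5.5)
Your proof is correct and follows the same route as the paper's: with $s_{t_0}$ drawn from the stationary distribution, $\E[r_{t_k}]$ equals the stationary mean reward for every $t_k$, and the strong law for additive functionals of a finite irreducible chain sends the time average to that same constant almost surely. Your additions are refinements of this argument rather than a different one: the SLLN over i.i.d.\ realizations for the ensemble side, the extended chain $(s_{t_k},a_{t_k},s_{t_{k+1}})$ to handle stochastic policies where the paper simply writes the reward as $g(s,s')$, and the remark that aperiodicity is not needed (one small slip: in your martingale alternative the decomposition should be $r_{t_k}=\bar g(s_{t_k})+D_k$, not $g(s_{t_k},s_{t_{k+1}})+D_k$, to match the function you then average).
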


\begin{proof}
Fix a finite-state Markov reward process with transition matrix $P$ and a bounded transition-based reward $g:\mathcal{S}\times\mathcal{S}\to\mathbb{R}$. Let $\pi$ be a stationary distribution ($\pi^\top P=\pi^\top$), and sample the initial state $X_0\sim\pi$. Then $(X_t)$ is stationary, hence so is the reward sequence $R_t\coloneqq g(X_t,X_{t+1})$:
\[
\E[R_{t_k}]
=\sum_{s,s'} \pi(s) P(s,s')\, g(s,s') \eqqcolon \rho
\quad \text{for all } t_k.
\]
Therefore, for every $t_k$,
\[
\underbrace{\lim_{N\to\infty}\frac{1}{N}\sum_{j=0}^N r^{(j)}_{t_k}}_{=\E[R_{t_k}]=\rho}
\;=\;\rho.
\]
Moreover, by the strong law for additive functionals of (stationary) finite Markov chains,
\[
\lim_{T\to\infty}\frac{1}{T}\sum_{\kappa=0}^T r^{(i)}_{t_\kappa}
=\lim_{T\to\infty}\frac{1}{T}\sum_{\kappa=0}^T g(X_{t_\kappa},X_{\kappa+1})
=\rho
\quad\text{almost surely.}
\]
Combining the two yields, for all $t_k$ and $i$,
\[
\underbrace{\lim_{N\to\infty}\frac{1}{N}\sum_{j=0}^N r^{(j)}_{t_k}}_{=\E[R_{t_k}]}
=\lim_{T\to\infty}\frac{1}{T}\sum_{\kappa=0}^T r^{(i)}_{t_\kappa}
\quad\text{a.s.}
\]
\end{proof}

As discussed in Section~\ref{sec:problem}, a strongly ergodic reward process is hard to achieve; we therefore relax the assumption on the initial state distribution and aim for an \emph{asymptotically} ergodic process. We also relax irreducibility: we require that, regardless of the initial state, the chain converges to a single stationary distribution of states. Concretely, we assume the underlying Markov chain is aperiodic and has a single recurrent (connectivity) class—such a chain is called a \emph{unichain}. In this case, the stationary distribution is unique and the chain converges to it geometrically~\cite[Thm.~16.0.1]{MeynTweedie2009}.

\begin{remark}
The aperiodicity assumption is necessary but not restrictive in practice: periodicity is typically broken by the presence of a single action with nonzero self-loop probability (i.e., a positive probability of remaining in the same state).
\end{remark}

\begin{theorem}
If an MRP's underlying Markov chain is unichain and aperiodic, then the resulting (transition-based) reward process is asymptotically ergodic as defined in Defintion~\ref{def:asymp_reward_erg}.
\end{theorem}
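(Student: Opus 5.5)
We mirror the proof of the previous theorem, replacing stationarity of the initial law by convergence to stationarity. Fix the finite-state chain $(X_t)$ with transition matrix $P$, a bounded transition reward $g:\mathcal{S}\times\mathcal{S}\to\R$, and an arbitrary initial distribution $\mu$. Let $R_t\coloneqq g(X_t,X_{t+1})$. Since the chain is unichain and aperiodic, it has a unique stationary distribution $\pi$, and by~\cite[Thm.~16.0.1]{MeynTweedie2009} the marginals converge geometrically: $\|\mu P^{t}-\pi\|_1\le C\beta^{t}$ for some $C<\infty$, $\beta\in(0,1)$. Set $\rho\coloneqq\sum_{s,s'}\pi(s)P(s,s')g(s,s')$. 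We show that both sides of Definition~\ref{def:asymp_reward_erg} equal $\rho$.

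\textbf{Ensemble side.} For a fixed $t_k$, the trajectories $r^{(j)}_{t_k}$, $j=0,1,\dots$, are i.i.d.\ copies of $R_{t_k}$, which is bounded. The strong law of large numbers therefore gives $\lim_{N\to\infty}\frac1N\sum_{j}r^{(j)}_{t_k}=\E[R_{t_k}]$ almost surely. We also have
\[
\E[R_{t_k}]=\sum_{s,s'}(\mu P^{t_k})(s)P(s,s')g(s,s'),
\]
so $|\E[R_{t_k}]-\rho|\le \|g\|_\infty C\beta^{t_k}\to0$. The joint limit in $N,t_k$ needs some care, because a joint limit is not automatically an iterated limit. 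We handle it with a uniform bound. By Hoeffding's inequality and a union/Borel--Cantelli argument over $t_k$, one gets $\sup_{t_k}\bigl|\frac1N\sum_j r^{(j)}_{t_k}-\E[R_{t_k}]\bigr|\to0$ almost surely. This is straightforward because $R_{t_k}$ takes values in a fixed finite set. Alternatively, the joint limit can be read as the iterated limit $t_k\to\infty$ after $N\to\infty$, in which case the previous display already suffices. Either way the ensemble side equals $\rho$.

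\textbf{Time side.} We need the strong law for additive functionals of a finite Markov chain started from an arbitrary $\mu$. To reach the unichain case, look at the pair chain $Y_t=(X_t,X_{t+1})$ restricted to pairs with $P(s,s')>0$. It is again unichain, with stationary law $\pi(s)P(s,s')$. With probability one the chain $X$ enters the recurrent class $\mathcal{C}$ in finite time $\tau$, since transient states of a finite chain are left for good almost surely. After $\tau$ the chain is irreducible on $\mathcal{C}$ (and $\pi$ is supported on $\mathcal{C}$). The finitely many rewards before $\tau$ are bounded, so they contribute $O(\tau/T)\to0$. For the remainder, apply the ergodic theorem for irreducible finite chains, for instance via regeneration at successive returns to a fixed state $s^\ast\in\mathcal{C}$: the excursion sums are i.i.d.\ with finite mean, and renewal-reward gives the limit $\rho$ almost surely, independently of the start. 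Hence $\frac1T\sum_{\kappa\le T}r^{(i)}_{t_\kappa}\to\rho$ almost surely for each realization $i$.

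\textbf{Conclusion and main obstacle.} Combining the two sides gives the equality required in Definition~\ref{def:asymp_reward_erg}. The main obstacle is the rigorous treatment of the joint limit $N,t_k\to\infty$ on the ensemble side, which the definition leaves somewhat ambiguous. I plan to handle it with the uniform-in-$t_k$ concentration bound described above, and to fall back to the iterated interpretation if that becomes too technical. The time-average part is standard once the reduction to the recurrent class via the almost surely finite absorption time $\tau$ is in place.
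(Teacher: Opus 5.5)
Your architecture is the paper's. Geometric convergence of $\mu P^{t_k}$ to $\pi$ gives $\E[R_{t_k}]\to\rho$ on the ensemble side. An ergodic theorem on the recurrent class gives $\rho$ on the time side, and your absorption-time plus regeneration argument correctly fills in what the paper only cites.

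The step that fails is the one you make your main plan: the claim that Hoeffding plus a union/Borel--Cantelli argument gives $\sup_{t_k}\bigl|\frac1N\sum_j r^{(j)}_{t_k}-\E[R_{t_k}]\bigr|\to0$ almost surely. The Hoeffding bound $2\exp(-cN\epsilon^2)$ does not depend on $t_k$, so the union over infinitely many $t_k$ is infinite for every $N$. The finite value set does not help. The conclusion itself is also false. Fix $N$. The $N+1$ independent copies of the pair chain $(X^{(j)}_t,X^{(j)}_{t+1})$ form a finite product chain. After an almost surely finite time it is irreducible on the product of the recurrent pair classes; this is where aperiodicity is used. It therefore visits infinitely often every configuration in which all copies make the same transition $(s,s')$ with $\pi(s)P(s,s')>0$. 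At those times the ensemble average equals $\frac{N+1}{N}g(s,s')$. Hence, almost surely and for every $N$, $\limsup_{t_k\to\infty}\bigl|\frac1N\sum_{j=0}^N r^{(j)}_{t_k}-\rho\bigr|\ge\max_{\pi(s)P(s,s')>0}|g(s,s')-\rho|-\|g\|_\infty/N$. This stays bounded away from zero unless the reward is constant under stationarity.

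So, for any non-trivial reward, the unrestricted joint limit in Definition~\ref{def:asymp_reward_erg} almost surely does not exist. No argument can prove the theorem under that literal reading. The other iterated order $\lim_N\lim_{t_k}$ fails as well, since the inner limit does not exist. What is true, and what you should commit to, is your fallback: the iterated limit $\lim_{t_k\to\infty}\lim_{N\to\infty}$, that is, the SLLN at each fixed $t_k$ (simultaneously for all $t_k$ by a countable intersection) followed by mixing. Alternatively, you can keep a joint limit restricted to $t_k\le T(N)$ with $\log T(N)=o(N)$. There your Hoeffding union bound over $t_k\le T(N)$ is summable in $N$, and Borel--Cantelli applies. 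The paper's own proof only shows $\lim_{t_k}\E[R_{t_k}]=\rho$ and then passes to the joint limit with a ``Therefore''; that step is valid precisely under the iterated reading. Make that reading explicit and remove the uniform-concentration route, and the rest of your proof is correct.
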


\begin{proof}
    Let $(X_t)$ be a Markov chain on a finite state space with transition matrix $P$, and let $g:\mathcal{S}\times\mathcal{S}\to\mathbb{R}$ be bounded. Under the unichain and aperiodic assumptions, there is a unique stationary distribution $\pi$ supported on the unique recurrent class, and for any initial distribution $\mu_0$ we have geometric convergence to stationarity~\cite[Thm.~16.0.1]{MeynTweedie2009}:
\[
\mu_0 P^{t_k}\;\xrightarrow[k\to\infty]{}\;\pi^\top
\quad\text{geometrically in total variation.}
\]
Hence, for the ensemble at time $t_k$,
\[
\lim_{t_k\to\infty}\;\E[R_{t_k}]
=\lim_{t_k\to\infty}\sum_{s,s'} \bigl(\mu_0 P^{t_k}\bigr)(s)\,P(s,s')\,g(s,s')
=\sum_{s,s'} \pi(s)P(s,s')\,g(s,s')
\eqqcolon \rho.
\]
Therefore,
\[
\lim_{N,t_k\to\infty}\frac{1}{N}\sum_{j=0}^N r^{(j)}_{t_k}
=\rho.
\]
On the other hand, by the ergodic theorem for additive functionals on the unique recurrent class,
\[
\lim_{T\to\infty}\frac{1}{T}\sum_{\kappa=0}^T r^{(i)}_{t_\kappa}
=\lim_{T\to\infty}\frac{1}{T}\sum_{\kappa=0}^T g(X_{t_\kappa},X_{\kappa+1})
=\rho
\quad\text{almost surely, independently of }X_0.
\]
Combining the two limits gives
\[
\lim_{N,t_k\to\infty}\frac{1}{N}\sum_{j=0}^N r^{(j)}_{t_k}
=\lim_{T\to\infty}\frac{1}{T}\sum_{\kappa=0}^T r^{(i)}_{t_\kappa}
\quad\text{a.s.}
\]
\end{proof}

How do these notions extend to an MDP when multiple action choices are available at each state? There are several approaches. One classical option is the following definition of an ergodic MDP:

\begin{defi}[Ergodic MDP~{\cite[Ch.~8.3]{puterman2014markov}}]\label{def:ergodic_MDP}
    An MDP is called \emph{ergodic} if the transition matrix corresponding to every deterministic stationary policy consists of a single recurrent class.
\end{defi}

While this definition yields strong guarantees, the condition is quite restrictive. Recent research shows that good results can be obtained for a broader class of MDPs. In particular, we require a \emph{unichain} structure and a \emph{unique optimal policy}, which implies an asymptotically ergodic reward process. For known MDPs, this assumption yields geometric convergence to the (approximately) optimal policy under value iteration~\cite{mustafin2025geometryavrew}. Thus, we typically do not need every single policy to be unichain; some policies may have multiple disconnected classes, some of which are “bad” in terms of received rewards. For ergodicity of the optimal policy and fast convergence to it, it is sufficient that all states are \emph{escapable}, \textit{i.e.}, the agent can leave them and reach the optimal recurrent class.

\subsection{Ergodicity-breaking}

Based on these results, we can now define more precisely for which types of environments ergodicity ``breaks.''
Starting with the basic definition of the Markov reward process, we require that rewards are defined on the states of the MDP or on the state transitions.
In the coin-toss example, the reward at a given time step depends on the rewards collected up to that time step and, by extension, on the history leading up to it.
This violates the Markov assumption.
Thus, we generally deal with non-ergodic reward dynamics when the reward at a given time step depends on rewards collected in prior time steps.
A special case of this is MDPs with multiplicative rewards, such as the coin-toss game.
MDPs with multiplicative rewards are more widely discussed in~\cite{baier2025multiplicative}.
Multiplicative rewards are common in the real world; many biological and chemical processes exhibit multiplicative growth.

A simple way around having multiplicative rewards in the coin-toss game is to introduce wealth as a state.
Then, the reward at a given time step depends only on the current state and the outcome of the coin toss.
Nevertheless, in this case, the state distribution is non-stationary: the wealth fluctuates and the variance diverges as time goes to infinity~\cite[Appendix]{hulme2023reply}.
We can generalize this case.
Consider a regulation problem, \eg we want to learn a control policy that regulates the water level of a tank around an equilibrium.
If we learn a proper policy, the state distribution will converge to a stationary distribution around the desired water level.
Instead, if we want a robot to cover as much distance as possible, the distance should, by design, not converge to a stationary distribution.

Another possibility for losing stationarity in the state distribution is if the probability distribution modeling the state transitions itself is non-stationary.
This is a case often considered in continual RL~\cite{elelimy2025rethinking}, where we explicitly aim to enable agents to adapt to changing environmental dynamics.
Another case is transfer learning~\cite{zhu2023transfer}, where we similarly aim to adapt to environmental changes.
Nevertheless, even if the environment itself is stationary, it might appear non-stationary to the learning agent, as stated in the ``big world hypothesis''~\cite{javed2024big}.
An example for this is distributed multi-agent RL~\cite{albrecht2024multi}.
In distributed multi-agent RL, all agents are learning and adapting their policies, but each individual agent has no information about the learning process of other agents.
Thus, following the same policy may yield diverging rewards as the other agents adapt their policies.
Therefore, from the perspective of the individual agent, the environment appears non-stationary.

Our results from the previous subsection further require a unichain Markov chain.
Thus, another example of ergodicity-breaking is multi-chain MDPs with disconnected sub-MDPs~\cite{sun2009rollout,atia2020steady}.
Consider the delivery robot from the introductory example.
Suppose the robot could be deployed in either a small town or a big city.
Clearly, the probability of being destroyed and, hence, the optimal policy differ in both cases.
Optimizing an expected reward that averages between the two possible initializations may favor a policy that is suboptimal in both potential realizations.
Thus, the robot should learn a policy that optimizes the long-term performance given the specific realization.

Finally, we assumed the underlying Markov chain to be irreducible, \ie any state can be reached from any other.
This property is violated if we have an absorbing state.
Absorbing states could, for instance, represent safety constraints from which we cannot recover.
Such as the robot in the introductory example, which cannot recover from being destroyed.
Such ``fatal states'' are often considered in safe RL~\cite{brunke2022safe}.
Thus, also in safe RL, we often deal with non-ergodic reward processes.

\subsection{Related work}
\label{sec:rel_work}

In the following sections, we will discuss three approaches that explicitly address non-ergodic reward structures in RL.
Nevertheless, other works have also considered similar settings.
For instance, by considering non-ergodic MDPs or non-Markovian reward structures, or by diverging from the expected return as an optimization criterion to improve robustness.

\fakepar{Non-ergodic MDPs}
When ergodicity is considered in RL, it is typically about ergodicity of the MDP or the underlying Markov chain.
For instance,~\cite{pesquerel2022imed,ok2018exploration,agarwal2022regret} derive guarantees for RL by explicitly assuming the MDP to be ergodic.
Other works assume the existence of an ``absorbing barrier,'' \ie a state from which the agent cannot recover.
They then provide guarantees for avoiding such states and explore only an ergodic sub-MDP~\cite{turchetta2016safe,heim2020learnable}.
The case of non-Markovian rewards has been studied as well.
When rewards are non-Markovian, they are not only dependent on the current state and action.
Nevertheless~\cite{majeed2018q} still provides convergence results for Q-learning under such a reward structure, and \cite{schmidhuber1990reinforcement,gaon2020reinforcement} propose dedicated algorithms that can handle non-Markovian rewards.
None of those works discuss the connection between non-ergodic MDPs or non-Markovian rewards with the divergence of time and ensemble statistics of the return.

\fakepar{Reward transformations}
One solution to the problem of non-ergodic rewards is to transform returns and optimize the increments of the transformed returns.
Such transformations have also been proposed in risk-sensitive RL~\cite{mihatsch2002risk,shen2014risk,fei2021risk,noorani2021reinforce,noorani2022Exprisk,prashanth2022risk} and reward-weighted regression~\cite{peters2007reinforcement,peters2008learning,wierstra2008episodic,abdolmaleki2018relative,peng2019advantage}. 
In those works, the transformation is often supported by more intuitive arguments.
However, there is a concrete connection between those transformations and the one we discuss in the next section.
In particular, for specific reward dynamics, both are equivalent.
This connection has been analyzed in more detail in~\cite{baumann2025reinforcement}.

\fakepar{Average-reward RL}
In \secref{sec:example}, we have shown that for a non-ergodic process, the expected return and the return of a single, infinite trajectory diverge.
If we are interested in the infinite trajectory, one may also consider average-reward RL as an optimization criterion.
In average-reward RL, we typically let $T$ in \eqref{eqn:std_objective} go to infinity and divide by $T$.
This is almost exactly the definition of the time average from Definition~\ref{def:reward_erg} (right-hand side of the equation).
The difference is that we still take an expectation over the rewards.
Average-reward RL has originated in dynamic programming~\cite{howard1960dynamic,blackwell1962discrete,veinott1966finding} and been studied since the early days of RL~\cite{mahadevan1996average}.
It remains an active field to this day~\cite{zhang2021policy,wei2020model,wei2022provably}.
While the general objective, optimizing along time instead of across realizations, seems similar, average-reward RL still takes an expected value.
By first taking an expected value in the coin-toss example, we remove stochasticity, and choosing $\alpha=1$ would be the optimal choice.
Thus, average-reward RL would not solve the ergodicity problem.

\section{Solutions}
\label{sec:solutions}

Recent work proposes various strategies for RL with non-ergodic reward processes.
In this section, we briefly introduce those approaches.

\subsection{Learning ergodicity transformations}
\label{sec:dominik}

In~\cite{peters2018time}, the authors show how---through so-called ``ergodicity transformations''---an ergodic observable can be extracted from a non-ergodic stochastic process.
Optimizing the expected value of this ergodic observable then corresponds to optimizing the \emph{time-average growth rate} of the process.
The time-average growth rate is the growth rate of the individual agent averaged over an infinitely long trajectory.
However, their approach requires an analytical model of the stochastic process.
In RL, we typically assume that the environment dynamics and, hence, also the reward dynamics are unknown.
Thus,~\cite{baumann2025reinforcement} proposes an algorithm for learning an ergodicity transformation from data.
The approach is inspired by variance-stabilizing transformations~\cite{bartlett1947use} and consists of two steps 
\begin{enumerate}
    \item Use LOESS (locally estimated scatter-plot smoothing)~\cite{cleveland1979robust} to plot $R_{t_k}$ against $\log(r_{t_k}^2)$;
    \item Interpolate a function $h$, which is then the desired transformation.
\end{enumerate}
The RL agent is then trained on the increments of transformed returns, $\Delta h(R_{t_k}) \coloneqq h(R_{t_k}) - h(R_{t_{k-1}})$.
By applying this algorithm to the coin-toss game, again with PPO as an underlying RL algorithm, the agent can learn a winning policy as shown in \figref{fig:coin_toss_learned_trans}.
In~\cite{baumann2025reinforcement}, the authors further show that the algorithm generalizes to more complex environments, showing increased performance on the popular cart-pole and reacher environments.

\begin{figure}
    \centering
    \input{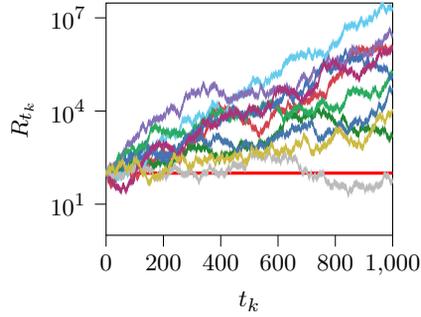}
    \caption{Coin-toss with learned transformation. \capt{By learning on increments of transformed returns, we can learn a winning policy. The red line marks the initial return; the other colored plots represent different realizations of the game. Taken from~\cite{baumann2025reinforcement}.}}
    \label{fig:coin_toss_learned_trans}
\end{figure}

A key limitation of this approach is its reliance on access to a trajectory of returns, which is necessary for learning an appropriate transformation. Therefore, it is currently restricted to settings like the coin-toss, where a single run with $\alpha=1$ already reveals the dynamics, or Monte Carlo-type algorithms, such as REINFORCE~\cite{williams1992simple}, that first collect a return trajectory before updating the policy.

\subsection{Modified geometric mean estimator}
\label{sec:xinyi}

Instead of explicitly estimating the ergodicity transformation,~\cite{sheng2025beyond} formulates the objective function as a convex combination of the traditional RL objective and the time-average growth rate $ G^\pi_\infty $ of an infinitely long trajectory under policy $\pi$ controlled by a tuning parameter $\lambda\in[0,1]$:
\begin{equation}
    \max_{\pi} \left\{ (1-\lambda) \mathbb{E}^\pi\left[ \sum_{\kappa=0}^\infty \gamma^\kappa r_{t_\kappa}  \right] + \lambda G^\pi_\infty  \right\}.  \label{equ:reg_objective}
\end{equation}
Here, the time-average growth rate, which is constant for a fixed policy $\pi$, acts as a regularizer in a regularized RL framework. 
Assuming we can directly observe the time-average growth rate, the authors derive a corresponding Bellman operator based on this formulation. 
% To enable practical implementation, the authors apply a systematic scheme of regularized modified policy iteration~\cite{geist2019theory}, consisting of a greedy step and an evaluation step.

In practice, we need to estimate the time-average growth rate.
Crucially, in non-ergodic reward dynamics, we must estimate it from individual trajectories rather than population-level statistics. Learning from a single trajectory is essential for capturing the path dependencies. 
The geometric mean is proven to be a valid estimator of the time-average growth rate under multiplicative dynamics. 
To mitigate the limitation of finite samples and avoid tracking the entire history,~\cite{sheng2025beyond} develops a modified version based on an $N$-sliding window, which allows long-term reward features to be captured along a single trajectory. 
Moreover, since the single-step ($N=1$) size is insufficient to capture dynamic evolution, multi-step Q-learning~\cite{peng1994incremental} is employed, with the same window size $N$ used for both estimation and value propagation. 

Applying this algorithm with $\lambda=1$ to the coin-toss game, we can see in \figref {fig:coin_toss_beyond_gm} that this algorithm can also learn a winning strategy.
In~\cite{sheng2025beyond}, the authors further evaluate the algorithm across various $\lambda$ in different benchmark environments, including cart-pole and lunar lander. 
The results demonstrate superior performance compared to standard multi-step Q-learning. 

% Currently, this approach is developed only for discrete action spaces and cannot yet be applied to more complicated simulations. 

\begin{figure}
    \centering
    \scalebox{0.8}{\input{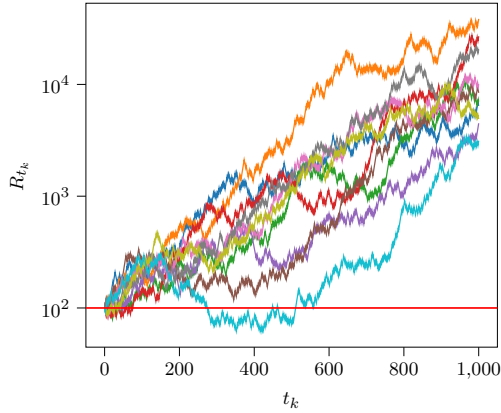}}
    \caption{Coin-toss with geometric mean estimation. \capt{By embedding the geometric mean estimator into the RL objective, we can learn a winning policy. The red line marks the initial return; the other colored plots represent different realizations.}}
    \label{fig:coin_toss_beyond_gm}
\end{figure}

\subsection{Temporal training and path-dependent updates}
\label{sec:bert}
An altogether different approach for environments with multiplicative dynamics is proposed in~\cite{verbruggen2025reinforcement}. 
By explicitly including path dependence in the problem, the agent can learn the temporal dynamics without changing the value function or the rewards it receives. 
This explicit implementation of the temporal dynamic requires the agent to face the same action-selection problem multiple times within a single training episode to optimise for a trajectory and its long-term consequences. 
We can illustrate this process in another simple coin-toss experiment. 
Agents are presented with only two actions and one state. 
In this multi-armed bandit setting, the agent needs to learn the trade-off between taking a safe action with a smaller reward ($r_{\text{safe}}$) and a risky action with a stochastic binomial outcome. 
These rewards are multiplicative on the agent's initial return ($R_{t_0}$) and depend on a predefined, but unknown to the agent, probability $p$ of receiving the worst outcome from the risky action. 
An agent's final return after one step results in
\begin{align}
    \label{eqn:coin_toss_temp_train}
    R_{t_{k+1}} = \begin{cases}
    r_{\text{safe}} \cdot R_{t_k} &\text{ if } a_{t} = \text{safe},\\
    r_{\text{risk,loss}} \cdot R_{t_k} &\text{ if } a_{t} = \text{risk} \text{ with probability } p, \\
    r_{\text{risk,win}} \cdot R_{t_k} &\text{ if } a_{t} = \text{risk} \text{ with probability } (1-p), 
    \end{cases}
\end{align}
where rewards are set such that; $r_{\text{risk, loss}}<1 \leq r_{\text{safe}} < r_{\text{risk, win}}$. 
The parameter $p$ dictates the probability of receiving the worst outcome from the agent taking the risky action. 
The preference of the agent, measured as a probability for choosing the safe action, depends on this parameter~$p$. 
Starting from zero preference for the safe action, this increases up to a certain value for the probability of the worst outcome, where the safe action becomes the most preferred. 
The specific value where the preference exceeds \SI{50}{\percent} is called the indifference point and indicates where the agent becomes risk-averse. 
This indifference point can be calculated theoretically to reflect the policy. 
Depending on the probability of the worst outcome under the risk action, we discern two distinct optimisation strategies. 
These strategies are reflected in the two resulting policies illustrated in \figref{fig:coin-toss-temp-training}. 
The optimal policy, as discussed in \secref{sec:example}, requires the agent to optimise using an alternative to the expected value, the time-dependent growth rate in a multiplicative dynamic. 
These two optimisers, expected value and growth rate, correspond to two different indifference points. 
The suboptimal policy based on expected value optimisation results in a predicted indifference point $p_\mathrm{E}$, whereas the optimal policy based on time growth changes preference at $p_\mathrm{T}$.
%Either the agent uses a traditional optimisation based on expected value,or it uses a path-dependent optimal solution based on time growth. An agent's policy reflects this strategy by studying the changing preference for taking the safe action over the risk action for an increasing probability of receiving the worst outcome on the risk bet, as illustrated in}
\begin{figure}
    \centering
    \includegraphics[width=0.5\linewidth]{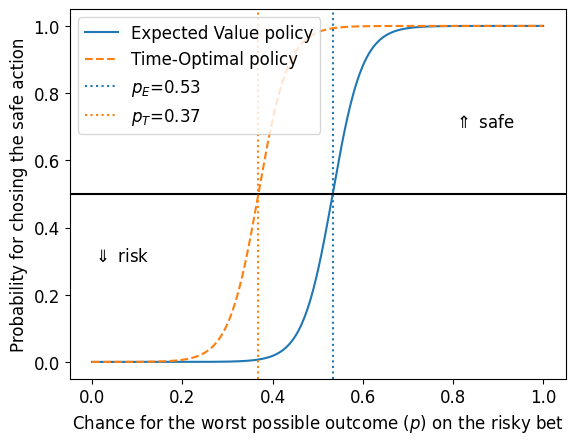}
    \caption{Illustration of two policies for an agent in a coin toss experiment with two actions. \capt{Two different policies indicate the preference for an agent to take a safe action over a risky one, illustrated by the indifference points. An optimal policy changes the action preference based on the prediction of time growth ($p_\mathrm{T}$) rather than the expected value ($p_\mathrm{E}$).}}
    \label{fig:coin-toss-temp-training}
\end{figure}
This toy problem illustrates the trade-off faced by the agent in a non-ergodic environment. 
Traditional RL implementations with various update rules achieve perfect expected value optimisation, leading to a suboptimal policy. 
The solution proposed in~\cite{verbruggen2025reinforcement} extends the agent's horizon to repeat this problem over multiple time steps, each time updating the final return with the new reward and returning to the original state. 
The agent now learns the optimal policy, reflected in a shift of the indifference point from the prediction based on the expected value~$p_\mathrm{E}$ to the optimal policy, where the agent changes preference around~$p_\mathrm{T}$. 
The inclusion of the trajectory encodes the temporal dynamics, and an alternative optimiser based on the growth rate is used in a multi-armed bandit problem.

An alternative formulation of the problem as an MDP introduces time-dependent states ($s_{t_k}$), one for each time step. 
An agent can again learn the temporal dynamics in this setting under the new training procedure, but it takes longer to transition to the optimal policy and requires updates that consider the entire trajectory under a Monte Carlo update rule. 
In such an implementation, the typical value provides the best estimate for the process~\cite{redner1990random}. 
As time increases, the exponential growth in trajectories makes outcomes with excessive return growth exceptionally rare. 
The expected value along each trajectory is now more similar to the most likely value~\cite{hulme2023reply} and, for sufficiently long time steps, aligns with the growth rate.

As in the previous section, we apply temporal training to the coin-toss experiment using deep reinforcement-learning agents~\cite{Verbruggen2026ModelAgnostic}. Figure~\ref{fig:coin_toss_temporal} shows that temporal training systematically improves performance relative to a standard, single-step training setup.

After training, we evaluate the learned policy in two distinct ways. In the first, the policy is fixed and repeatedly applied without further reference to the wealth dynamics used during training: the predicted investment fraction is iterated directly, producing the successive trajectories shown in Fig.~\ref{fig:temporal_result_fixed}. In the second, the same learned policy is applied recursively to the output of the previous step, with normalized current wealth re-introduced as the input at each iteration, shown in Fig.~\ref{fig:temporal_result_var}.

Because temporal training is performed over a fixed time horizon and wealth scale, these two evaluation modes reveal how the learned policy behaves both when decoupled from and when embedded within the underlying wealth dynamics. In both cases, temporal training allows agents to outperform the standard solution to the example experiment presented in Section~\ref{sec:example}.

%Because the model requires hyperparameter tuning, results can vary with a naive implementation. Nonetheless, the agent's resulting policy, reflected by the estimated fraction $\hat{f}=0.27$, closely resembles the Kelly objective $f^{*}=0.25$.
\begin{comment}
\begin{figure}
    \centering
    \input{tikz/coin_toss_temporal_fixed2}
    \caption{Coin-toss with agents using temporal training. \capt{An actor-critic model iterates multiple steps in one training episode for training the agent's policy. The red line marks the initial wealth, and the coloured lines indicate different iterations of the experiment using the learned policy. Training techniques are used as discussed in~\cite{Verbruggen2026ModelAgnostic}. The temporal training method  for an agent results in an optimal strategy compared to standard training, Figure~\ref{fig:coin_toss} as presented in Section~\ref{sec:example}.}}
    \label{fig:coin_toss_temporal}
\end{figure}
\end{comment}
\begin{figure}
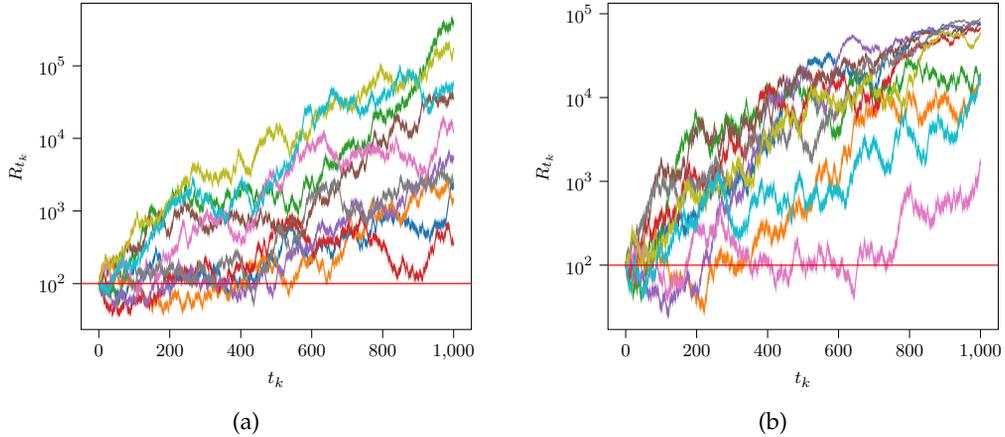

\centering
  %------------- first sub-figure (top) -----------------
  \begin{subfigure}{0.48\textwidth}
    \centering
    \resizebox{\linewidth}{!}{%
    \input{tikz/coin_toss_temporal_fixed}
    }
    \caption{}
    \label{fig:temporal_result_fixed}
  \end{subfigure}
  \hfill
  %------------- second sub-figure (bottom) -------------
  \begin{subfigure}{0.48\textwidth}
    %\centering
    \resizebox{\linewidth}{!}{%
    \input{tikz/coin_toss_temporal_var}
    }
    \caption{}
    \label{fig:temporal_result_var}
  \end{subfigure}
  \caption{Coin-toss with agents using temporal training. \capt{An actor-critic model iterates multiple steps in one training episode for training the agent's policy given an initial wealth indicated by the red line. The colored trajectories in~\ref{fig:temporal_result_fixed} represent successive iterations obtained by repeatedly applying the final policy’s predicted investment fraction. When the trained model is recursively applied to its own outputs, taking normalized wealth as the state input, the resulting iterative dynamics are shown in Fig~\ref{fig:temporal_result_var}. Training techniques are used as discussed in~\cite{Verbruggen2026ModelAgnostic}.}}
  \label{fig:coin_toss_temporal}
\end{figure}

%\end{comment}
\section{Conclusions and open challenges}
In this paper, we discussed the challenge posed by non-ergodic reward processes in RL.
We related the notion of non-ergodic reward processes to more widely used ergodicity notions in the RL community and presented three existing solutions for optimizing long-term performance under non-ergodic reward dynamics.

These three approaches are starting points for resolving the non-ergodicity challenge in RL, but not the final answer.
All three algorithms have, to date, been applied only to relatively simple RL environments, such as the reacher or the lunar lander~\cite{brockman2016openai}.
Extending them to more complex environments requires significant work.
For instance, the algorithm from \secref{sec:dominik} learns a transformation only of the reward.
In the coin-toss game, this is fine, as the structural properties of the game do not change when changing $\alpha$.
However, in more complex environments, one may easily imagine that the transformation should vary with the agent's state and actions.
Moreover, the algorithm currently separates the learning of a transformation from that of an optimal policy.
Joint learning further complicates the problem.
The algorithm from \secref{sec:xinyi} similarly only considers the returns and is currently restricted to environments with discrete action spaces.
Furthermore, it introduces two novel hyperparameters: $\gamma$ to trade off the time-average growth rate and expected value, and $N$ for the step size in multi-step Q-learning and the length of the $N$-sliding window, both of which must be tuned.
The algorithm from \secref{sec:bert} requires learning the temporal dynamics of the states, which is challenging in complex environments.
Further open questions that remain unanswered are the development of a principled empirical measure of non-ergodicity, which would allow us to gauge ``how non-ergodic'' typical RL benchmarks are and how well developed algorithms are at mitigating the problem, and a proper treatment of the discount factor from an ergodicity perspective~\cite{adamou2021microfoundations} (see also Remark~\ref{rem:discount}).

\enlargethispage{20pt}

\ack{This research was partially supported by the Finnish Ministry of Education and Culture through the Intelligent Work Machines Doctoral Education Pilot Program (IWM VN/3137/2024-OKM-4), the Research Council of Finland Flagship programme: Finnish Center for Artificial Intelligence FCAI, and \emph{Kjell och M{\"a}rta Beijer Foundation}.}

%%%%%%%%%% Insert bibliography here %%%%%%%%%%%%%%

\vskip2pc
\bibliographystyle{RS} %%%% .BST file

\bibliography{sample} %%%%% .Bib file

\end{document}